\DeclarePairedDelimiterX{\infdivx}[2]{(}{)}{%
  #1\;\delimsize\|\;#2%
}
\DeclarePairedDelimiter{\norm}{\lVert}{\rVert}
\newcommand{\inL}{\textsf{in}}
\newcommand{\outL}{\textsf{out}}
\newcommand{\undecL}{\textsf{und}}
\newcommand{\domit}{\mathsfit{dom}}  
\newcommand{\andC}{\textsf{and}} 
\newcommand{\FFMMA}{\ensuremath{(A, R)}} 
\newcommand{\Table}{\textsf{tbl}}   
\newcommand{\TABLE}{\textsf{TBL}}
\newcommand{\Tableit}{\mathsfit{tbl}}
\newcommand{\TABLEit}{\mathsfit{TBL}}
\newcommand{\hide}[1]{}
\newcommand{\header}{\mathsfit{header}}
\newcommand{\countit}{\textsf{count}} 
\newcommand{\SCHEMAit}{\mathsfit{SCHM}} 
\newcommand{\Schema}{\textsf{schm}}
\newcommand{\Schemait}{\mathsfit{schm}}
\newcommand{\evalit}{\mathsfit{eval}}
\newcommand{\sem}{sem}
\DeclareMathAlphabet{\mathsfit}{T1}{\sfdefault}{\mddefault}{\sldefault}
\SetMathAlphabet{\mathsfit}{bold}{T1}{\sfdefault}{\bfdefault}{\sldefault}
\newtheorem{definition}{Definition} 
\newtheorem{example}{Example}  
\newtheorem{proposition}{Proposition}  
\newtheorem{proof}{Proof} 
\title{Relational Argumentation Semantics} 
\author{
Ryuta Arisaka \qquad\qquad Takayuki Ito
\institute{Kyoto University\\
Kyoto, Japan}
\email{\quad ryutaarisaka@gmail.com \quad\qquad ito@i.kyoto-u.ac.jp}
}
\begin{document}
\maketitle

\begin{abstract}  
    In this paper, we propose 
	a fresh perspective on argumentation semantics, 
	to view them as a relational database. 
	It
	offers encapsulation of the underlying 
	argumentation graph, and allows us to  
	understand argumentation semantics  
	under a single, relational perspective, leading 
	to the concept of relational argumentation semantics. 
	This is a direction to understand argumentation semantics 
	through a common formal language. 
	We show that many existing semantics such as 
	explanation semantics, multi-agent semantics, 
	and more typical semantics, 
	that have been proposed for specific purposes,  
	are understood in the relational perspective. 
\end{abstract}

\section{Introduction}    
Relational database promotes knowledge management. Formal 
argumentation \cite{Dung95} (more traditionally belief revision \cite{Makinson85,Katsuno92})  
offers a machinery for conflict resolution. 
It is  natural to try to integrate the two.

The relation of database and formal 
argumentation has been 
primarily for the latter to augment the former for 
dealing with
inconsistent information present in it. There are a number of ideas proposed in this direction: 
inconsistent database management, e.g. \cite{Amgoud07,Arioua17,Deagustini13,Jefferys06,Hunter01,Hao19,Kokciyan18,Moguillansky12,Maria14,Leopoldo05,Novak15,Santos10}), 
argumentation or dialectical databases \cite{Capobianco05,Pradhan03}, 
argumentation-supported database queries 
\cite{Muller14,Gordon08}, 
schema matching \cite{Nguyen13}, argumentation 
semantic web \cite{rahwan07}.  

By contrast, the other direction of mingling, whereby 
formal argumentation is primary and relational database theory 
acts on it, has not been as well explored.
The contrast
could be in principle  due to a lack of practical or theoretical 
significance in 
such matrimony. We shall demonstrate, however, that 
that is far from the case; that, in fact, 
it gives rise to a new means of encapsulation 
to facilitate a more general understanding of argumentation semantics; 
uniting, 
in particular, disparate argumentation semantics 
proposed for different purposes under a single relational perspective.

To explain what we specifically propose, a broad 
description of formal argumentation should be in place first and 
foremost. Given a representation of argumentation  
as a directed graph of: nodes representing information (typically 
arguments though can be anything) and edges representing certain directional
relation (of attacks typically), 
formal argumentation typically identifies which nodes 
are accepted (, rejected, undecided-to-be-either, and so on). 
In a labelling-based approach, it amounts to 
finding labelling functions that assign 
a label corresponding to those 
acceptance statuses to each graph node in a way 
satisfying a given set of labelling constraints. Depending on 
chosen constraints, there could be more 
than one labelling function meeting the requirement, and there could be none. 

On to our proposal, let us say that 
there is an argumentation graph and a set of constraints, 
and that we have obtained all labelling functions as satisfy 
them. Our proposal is {\it to view them 
as a relational database}, to be manipulated upon, 
just as any relational database. 
Encapsulation of the underlying graph structure 
is one of the key advantages of this viewpoint, 
allowing us to understand 
argumentation semantics more generally through the abstraction. 
We show that once some primary semantics 
are obtained and put in a database table form, many other 
semantics are its derivatives in the relational perspective.   

We formalise our proposal by formulating
a tuple relational calculus (see any standard 
text, e.g. \cite{Ramakrishnan02}, for database, but 
see \cite{Kolaitis09} for an overview of theoretical relational 
database) 
for argumentation semantics. We then show that 
it serves to bring  
different argumentation semantics 
into a single relational perspective. 
The novelty, significance and relevance of this work, 
from our vantage point, are as follows.  
	\textbf{Novelty}: as far as our search on Google Scholar 
		has permitted us, we did not find 
		any previous work with a similar intent 
		to understand argumentation semantics 
		in tuple relational calculus. 
		The perspective to view 
		argumentation semantics as 
		a relational database, and 
		any query result 
		as a relational argumentation semantics, 
		is novel. 
	\textbf{Significance}: as we show, the relational perspective 
		 brings  
	 	argumentation semantics 
		proposed for different purposes into a single relational 
		perspective. 
		The abstraction helps 
		decouple primary components from derivatives, 
		allowing us to understand argumentation semantics 
		more generally. An argumentation semantics 
		has been proposed 
		for specific sets of constraints so far. 
		Our proposal is a way of classifying 
		argumentation semantics through a common formal 
		language, in that sense assisting 
		a more uniform understanding of argumentation semantics. 
		Furthermore, it is practically significant, 
		since any sensible query in the tuple relational 
		calculus that we propose here can be directly 
		handled in SQL; there is no need for any extra 
		implementation. 
	\textbf{Relevance}:  the topic is 
		of interest not only to 
		 argumentation communities and 
		the theoretical database community, 
		but also to those communities working 
		on extraction and management of knowledge, where 
		encapsulation is an asset.

\section{Technical Preliminaries} 
Tuple relational calculus is the theoretical foundation of SQL. 
However, it is not an entirely familiar concept 
to formal argumentation practitioners taken as a whole. 
What may be obvious to the database community  
needs to be introduced with sufficient detail. To the database community, 
tuple relational calculus - even though to be 
adjusted for the benefit of argumentation semantics - is indeed a 
fairly standard 
formal machinery needing barely any new introduction.  
On the other hand, what of argumentation semantics, 
and first of all what exactly an argumentation semantics is, 
may need to be formally detailed before it becomes clear what 
contributions 
the relational calculus can make to formal argumentation. 

These points weighed in, 
we choose to introduce 
formal argumentation semantics in this preliminary section 
in a way accessible to non-formal-argumentation communities, 
and treat adaptation of tuple relational calculus 
in Section 3 in a way accessible to the formal argumentation community. 

\subsection{Formal argumentation}  
Assume $\mathcal{A}$ is the class of 
abstract entities we understand as arguments, 
and $\mathcal{R}$ is the class of all binary relations 
over $\mathcal{A}$. We refer to a member of $\mathcal{A}$ by $A$, 
that of $\mathcal{R}$ by $R$, with or without a subscript. 
Assume $\mathcal{R}^{A}$ denotes the subclass of $\mathcal{R}$ 
containing every - but no other - $R \in \mathcal{R}$ 
that satisfies: if $(a_1, a_2) \in R$, then 
$a_1, a_2 \in A$. Then, a graph structure 
$(A, R)$ with $R \in \mathcal{R}^A$ is an abstract representation 
of an argumentation. Far more often than not, 
$A$ is finite, and we treat it as such in the rest. As for 
$R$, 
we assume for simplicity that every member indicates an attack relation, that is, 
$(a_1, a_2) \in R$ means $a_1$ attacks $a_2$, and $a_1$ is the attacker 
of $a_2$. 
\begin{example}[Warring countries, an example of an argumentation graph] 
    Suppose 7 warring countries, each having its own agenda as regards  
    which countries to intrude into:   
	{\small 
	\begin{multicols}{2}
	\begin{description} 
		\item[$a_A$:] Country A ``We destroy Country C.'' 
		\item[$a_B$:] Country B ``We destroy Country A.'' 
		\item[$a_C$:] Country C ``We destroy Country B.''
		\item[$a_D$:] Country D ``We destroy Country C and Country E.''
		\item[$a_E$:] Country E ``We destroy Country D.''  
		\item[$a_F$:] Country F ``We destroy  Country G.'' 
	\end{description}
	\end{multicols}   
	}
	Each country's stance, as an argument, and the direction of aggression  
generate the following argumentation graph: $(\{a_A, 
	\ldots, a_G\}, \{(a_F, a_G), (a_E, a_D), (a_D, a_E), 
	(a_D, a_C), (a_C, a_B), (a_B, a_A), (a_A, a_C)\})$. \hfill$\clubsuit$

	\begin{center} 
	\begin{tikzcd} 
		a_G & a_F \arrow[l] & a_E \arrow[r,bend right=15] & a_D \arrow[l, shift left, bend right=15] \arrow[r] 
		& a_C \arrow[r] & a_B \arrow[r] & a_A \arrow[ll, bend right=30] 
	\end{tikzcd} 
	\end{center}  
\end{example}

Assume $L$ to be a set of labels, and assume $\Lambda$ to be the class of all
partial functions $\mathcal{A} \rightarrow L$. 
Each member of $\Lambda$ 
is a labelling function, though, 
as per a permeated convention, 
we simply call it a labelling in the rest. 
We refer to each labelling by $\lambda$ 
with(out) a subscript. 

One of the main objectives of the abstract representation  
of an argumentation is to 
infer from it acceptance statuses 
of arguments, to be determined by a given set of constraints.  
In an equivalent term, 
it is to 
derive a set of labellings that satisfy labelling constraints. 
As for what constraints exist, there are many that have been - 
and are still being - proposed 
for specific purposes.   
Assume $\domit(\lambda)$ is the domain of $\lambda$, and assume 
a label $\inL \in L$ indicating `accept'. 
Given $(A, R)$, there is the constraint of `conflict-freeness': 
$\lambda \in \Lambda$ be such that  
$\domit(\lambda) \subseteq A$
and that, for any $a_1, a_2 \in \domit(\lambda)$, 
if $\lambda(a_1) = \lambda(a_2) = \inL$, 
then $(a_1, a_2) \not\in R$. 
This condition enforces no simultaneous 
acceptance of an argument and its attacker (no
destroyed country can destory another country). 
The constraint of `$A_1$ 
defending $a$' for some $A_1 \subseteq A$ 
and some $a \in A$ is: 
$\lambda \in \Lambda$ 
be such that $A_1 \subseteq \domit(\lambda) \subseteq A$ 
and that if $\lambda(a_1) = \inL$ for every $a_1 \in A_1$, 
then for any $a_x \in A$ 
with $(a_x, a) \in R$, 
there exists some $a_2 \in A_1$ 
such that $(a_2, a_x) \in R$. More informally, regard any subset of arguments 
in $A$ assigned $\inL$ by $\lambda$ as belong to the same group, 
it characterises the collective 
defence of $a$ by the group $A_1$: if 
$a$ is
attacked by an argument, there is at least one member in the group $A_1$ 
that offers a counter-attack to the attacker.   
Any labelling 
$\lambda$ with $\domit(\lambda) = A$ satisfying both conflict-freeness  
and the `$\domit(\lambda)$ 
defending $a$' for every $a \in \domit(\lambda)$ 
is called admissible.  
Assume some label $l_c \in L$ 
as the complement of $\inL$ (`not accept'), and 
for a short-hand, 
assume `$\lambda[a_1, \ldots, a_n] = 
(l_1, \ldots, l_n)$' means both: 
$\domit(\lambda) = \{a_1, \ldots, a_n\}$; 
and  
$\lambda(a_i) = l_i$ for every $1 \leq i \leq n$.  
Then in our example above, 
$\lambda[a_G, a_F, a_E, a_D, a_C, a_B, a_A]$  
can be any one of the following; see (the table referred to by) 
$\Table_1$ in Fig. \ref{fig1}: 
\begin{multicols}{4} 
\begin{itemize} 
	\item $(l_c, \inL, l_c, l_c, l_c, l_c, l_c)$. 
	\item $(l_c, \inL, \inL, l_c, l_c, l_c, l_c)$. 
	\item $(l_c, \inL, l_c, \inL, l_c, l_c, l_c)$. 
	\item $(l_c, \inL, l_c, \inL, l_c, \inL, l_c)$. 
\end{itemize}  
\end{multicols} 

\begin{figure}[!t]  
	\scalebox{0.86}{ 
         \begin{tabular}{c c | c c c c c c c} 
             $\Table_1$ & & $a_G$ & $a_F$ & $a_E$ & $a_D$ & 
		 $a_C$ & $a_B$ & $a_A$\\\hline
		 $\lambda_1$ & & $l_c$  
		 & $\inL$ & $l_c$ &  $l_c$ & $l_c$ & $l_c$ & $l_c$\\  
		$\lambda_2$ & & $l_c$ & \inL & \inL & $l_c$ & $l_c$ & $l_c$ 
		 & $l_c$\\
		 $\lambda_3$ & & $l_c$ & \inL & $l_c$ & \inL & $l_c$ & $l_c$ 
		 & $l_c$\\
		 $\lambda_4$ & & $l_c$ & \inL & $l_c$ & \inL & $l_c$ & 
		 \inL & $l_c$
	 \end{tabular}  
	 {\ }\qquad  
  	\begin{tabular}{c c | c c c c c c c} 
             $\Table_2$ & & $a_G$ & $a_F$ & $a_E$ & $a_D$ & 
		 $a_C$ & $a_B$ & $a_A$\\\hline
		 $\lambda_5$ & & \outL 
		 & \inL & \undecL &  \undecL & \undecL & \undecL & \undecL\\  
		$\lambda_6$ & & \outL & \inL & \inL & \outL & \undecL & \undecL 
		 & \undecL\\
		 $\lambda_7$ & & \outL & \inL & \outL & \inL & \outL & \inL 
		 & \outL 
	 \end{tabular}  
	 }
	 {\ }\\\\\\ 
	 \scalebox{0.88}{
    	\begin{tabular}{c c | c c c c c c c c} 
		$\Table_3$ & & $a_E$ & $a_D$ & $a_C$ & $a_B$ & $a_A$\\\hline
		$\lambda_8$ & & $l_c$ &  \inL & $l_c$ & \inL & $l_c$ 
	 \end{tabular}  
	 {\ }\qquad 
         \begin{tabular}{c c | c c c} 
		 $\Table_4$ & & $a_G$ & $a_F$ & $a_E$\\\hline 
		$\lambda_9$ && \outL & \inL & \undecL \\
		 $\lambda_{10}$ && \outL & \inL & \inL\\ 
		 $\lambda_{11}$ && \outL & \inL & \outL 
	 \end{tabular} 
	 {\ }\qquad 
	 \begin{tabular}{c c | c c c c}  
		 $\Table_5$ & & $a_D$ & $a_c$ & $a_B$ & $a_A$ \\\hline 
		 $\lambda_{12}$ && \inL & \outL & \inL & \outL\\
	 \end{tabular}  
	}
		\caption{(The table referred to by) $\Table_1$ 
		lists admissible labellings ($\lambda_1(a_G)
		= l_c$, and so on). 
		$\Table_2$ lists complete labellings with 
		$\inL, \outL, \undecL$. 
		$\Table_3$ lists labellings that explain $a_B$. 
		$\Table_4$ lists agent 1's local complete labellings 
		for $a_G, a_F$ and $a_E$ based on the knowledge 
		of $(\{a_G, a_F, a_E, a_D\}, 
		\{(a_F, a_G), (a_E, a_D), (a_D, a_E)\})$. 
		$\Table_5$ lists agent 2's local stable labellings 
		for $a_D, a_C, a_B$ and $a_A$ based on 
		the knowledge of 
		$(\{a_D, a_C, a_B, a_A\}, \{(a_E, a_D), (a_D, a_E), 
		(a_D, a_C), (a_C, a_B), (a_B, a_A), (a_A, a_C)\})$.} 
		\label{fig1}
\end{figure}

\noindent {\it Classic argumentation semantics.}  Any admissible labelling 
that also satisfies the `completeness' constraint:
$\lambda$ be such that, for any $a_x \in A$, 
if $\{a \in A \mid \lambda(a) = \inL\}$ defends $a_x$, 
then $a_x \in \domit(\lambda)$ and $\lambda(a_x) = \inL$, is called complete 
in a weaker sense (extension-based approach). 
As proved in \cite{Caminada06}, 
a stronger sense is possible with $\outL$ (`reject') 
or else $\undecL$ (`undecided') following the three rules below: 
given $(A, R)$, for every $a \in A$, \textbf{(Rule 1)} 
	 $\lambda(a) \in \{\inL, \outL, \undecL\}$, 
	 \textbf{(Rule 2)} 
 $\lambda(a) = \inL$ if and only if, or iff, 
	  there is no attacker of $a$ that is not assigned $\outL$ 
	  by $\lambda$, and  
  \textbf{(Rule 3)} $\lambda(a) = \outL$ iff 
	   there is some attacker of $a$ that is assigned $\inL$ 
	   by $\lambda$.  

Labellings obtained by these are conservative over the weaker-sense labellings for $\inL$-labelled 
arguments, but give more information to the acceptance statuses 
of the other arguments. 
Set $L = \{\inL, \outL, \undecL, l_c\}$ 
to express either of the senses. For the stronger one, 
as in $\Table_2$ in Fig. \ref{fig1}, 
$\lambda[a_G, a_F, a_E, a_D, a_C, a_B, a_A]$ 
can be: 
$(\outL, \inL, \undecL, \undecL, \undecL, \undecL, \undecL)$; 
$(\outL, \inL, \inL, \outL, \undecL, \undecL, \undecL)$;  
or $(\outL, \inL, \outL, \inL, \outL, \inL, \outL)$. 

Let $\Lambda^{com}_{(A, R)}$ be the set of all complete labellings 
of $(A, R)$ in either of the senses, and assume 
$\lambda_1 \preceq \lambda_2$ holds 
just when 
firstly $\domit(\lambda_1) = \domit(\lambda_2)$ holds, 
secondly $\bigcup_{a \in \domit(\lambda_1)}\lambda_1(a) = 
\bigcup_{a \in \domit(\lambda_2)}\lambda_2(a)$ holds, and  
thirdly $\lambda_1(a) \not\in \{\undecL, l_c\}$ materially implies 
$\lambda_1(a) = \lambda_2(a)$. Then 
$\lambda$ is called: preferred 
just when firstly it is complete and secondly, 
for any $\lambda_x \in \Lambda^{com}_{(A, R)}$, 
either $\lambda_x \preceq \lambda$ holds or else 
$\lambda_x$ and $\lambda$ are not comparable  
in $(\Lambda^{com}_{(A, R)}, \preceq)$; 
stable just when it is preferred and, 
for any $a_x \in A$, if 
$\lambda(a_x) \not= \inL$, 
then there is some $a \in A$ such that 
$\lambda(a) = \inL$ and that $(a, a_x) \in R$; 
and grounded iff $\lambda$ is the greatest lower bound of 
$(\Lambda^{com}_{(A, R)}, \preceq)$. Classic 
argumentation semantics make use of them:  
	$X$ semantics of $(A, R)$ is the 
		set of all $X$ labellings of $(A, R)$, 
		where $X$ is a member of $\{$Complete, Preferred, 
		Stable, Grounded$\}$.  
Intuition for each semantics is: complete semantics 
is inclusive of all arguments that can be defended; preferred 
semantics is prejudiced for a maximally opinionated complete labelling
by $\preceq$,
grounded semantics is an impartial 
semantics preferring to judge `accept' 
only those arguments that are judged `accept' by all 
complete labellings; and stable semantics ensures 
that the `accept' arguments attack 
every other arguments. \\\\
\noindent {\it Explanation semantics.}   
The defence constraint enforces certain 
causality in the labels of arguments; 
that, given $\lambda$, some argument assuming $\inL$ 
is due specifically to other arguments assuming certain labels. 
Let us look back at our example, and consider 
one admissible labelling $\lambda[a_G, \ldots, a_A] 
= (l_c, \inL, l_c, \inL, l_c, \inL, l_c)$. 
According to the reasoning 
in \cite{Fan15b}, $\lambda(a_B) = \inL$ is explained
by $a_D$'s $\inL$, since, if $a_D$ is not assigned $\inL$, 
$a_B$ also would not be. On the other hand, it is not explained by 
$a_F$'s $\inL$, since the label of $a_F$ 
exerts no influence over the label of $a_B$. Following 
the intuition, we may enforce the constraint of `relevance to $a$' 
for some $a \in A$: 
given $(A, R)$, let $(A', R')$ 
be such that $A' = \bigcup_{(a_x, a) \in R^*} a_x$ 
and that $R' \in \mathcal{R}^{A'}$ ($R^*$ is the reflexive-transitive 
closure of $R$), $\lambda$ shall be such that   
$\domit(\lambda) = A'$. 

Then, for any $a \in A$, 
$\lambda$ explains $a$ iff: firstly $\lambda$ satisfies 
relevance to $a$; 
secondly, given $AF \equiv (\domit(\lambda), R)$ with $R \in \mathcal{R}^{\domit(\lambda)}$,
$\lambda$ is admissible in $AF$; 
and finally $\lambda(a) = \inL$. We may say that 
the set of all labellings that explain $a$ 
is the explanation semantics of $(A, R)$ for $a$. 
In our example, the explanation semantics for $a_B$ 
is a singleton set containing just $\lambda[a_E, \ldots, a_A] 
= (l_c, \inL, l_c, \inL, l_c)$; see $\Table_3$ in Fig. \ref{fig1}. \\

\noindent {\it  Multi-agent semantics.} 
Of many a number of existing multi-agent argumentation semantics, 
there are those that focus on 
derivation of a global argumentation semantics 
from agents' local argumentation semantics. 
One typical idea that we touch upon in this paper is: given $(A, R)$, 
partition  $A$ 
into $A_1, \ldots, A_n$ ($\bigcup_{1 \leq i \leq n}A_i = A$), each of which 
represents arguments put forward by an agent; 
each agent $i$ ($1 \leq i \leq n$) 
knows some $(A_x, R_x)$ such that $A_i \subseteq A_x \subseteq A$ 
and $R_x = R \cap (A_x \times A_x)$, and derives some semantics of 
$(A_x, R_x)$ and keep the labels for arguments in $A_i$ which 
form the agent's local semantics $\Lambda_i$ where 
$\lambda_i \in \Lambda_i$ satisfies $\domit(\lambda_i) = A_i$ 
(a concrete example follows below). 
In the meantime, an external observer computes 
a semantics of the global argumentation graph 
$\Lambda_g$ (where $\lambda_g \in \Lambda_g$ satisfies $\domit(\lambda_g) = A$).
A multi-agent labelling is then any $\lambda_g \in \Lambda_g$ 
such that, for each partition $A_i$,
it assigns the same labels to $A_i$ 
as some $\lambda_i \in \Lambda_i$. All multi-agent labellings 
form a multi-agent semantics. 
In our example, suppose agent 1: puts forward $a_G, \ldots, a_E$; and 
knows $a_G, \ldots, a_D$ and attacks among them. 
Suppose agent 2: puts forward $a_D, \ldots a_A$; and 
knows $a_E, \ldots, a_A$ and attacks among them. 
Suppose agent 1 (resp. agent 2) uses complete semantics (stable semantics) 
with 
$\inL$, $\outL$ and $\undecL$. Then agent 1's (resp. agent 2's) local complete 
(resp. stable) semantics comprises 3 (resp. 1) labellings as shown in $\Table_4$ (resp. 
$\Table_5$) in Fig. \ref{fig1}. Suppose an observer uses 
complete semantics ($\Table_2$). Then the multi-agent semantics is 
a singleton set $\{\lambda_7\}$. 

\section{Relational Argumentation Semantics}   
To understand the different types of argumentation semantics
in a single perspective, we present the viewpoint: 
`argumentation semantics as a relational database' in this section, 
consolidating the bonding and relational generalisation 
of argumentation semantics. We begin by  
formulating  
a tuple relational calculus for argumentation semantics.

\begin{definition}[Arg-labelling table]
	We define an arg-labelling table to be a tuple $(A, \Lambda_1)$  
	with $A \subseteq_{\text{fin}} \mathcal{A}$, 
	called header of $(A, \Lambda_1)$, 
	and $\Lambda_1 \subseteq \Lambda$ with any of its member 
	$\lambda$ satisfying $\domit(\lambda) = A$ 
	called body of $(A, \Lambda_1)$.   
	
\end{definition} 
\begin{example}[Arg-labelling table] 
	We saw some examples of 
	arg-labelling tables, literally expressed 
	in tables, in Fig. \ref{fig1}. 
	Putting the table referred to by $\Table_1$ 
	in tuple form, 
	we obtain $(\{a_G, \ldots, a_A\}, 
	 \{\lambda_1, \ldots, \lambda_4\})$ 
	 where $\lambda_i$, $1 \leq i \leq 4$, is such that 
	 $\domit(\lambda_i) = \{a_G, \ldots, a_A\}$,
	 and that $\lambda_i(a_j) = x_{(i, j)}$, $j \in \{G, \ldots, A\}$ 
	 $(x_{(i, j)}$ 
	 is the element in the table in the $i$-th row and $j$-th column, 
	 as usual of a matrix notation). \hfill$\clubsuit$ 
\end{example} 
It is obvious from the definition that there is no duplicate in the body. Now:  
\begin{definition}[Relational arg-labelling database]
	Let $\mathcal{T}$ be a potentially uncountable set of table names, 
	let $\Table$ with(out) a subscript 
	and a superscript  denote 
	its member, 
	and let $\TABLEit$ with(out) a subscript 
	its subset. 
       We define a relational arg-labelling database schema 
	to be a tuple $(L, \TABLEit, \header)$ 
	with $\header: \TABLEit \rightarrow 
	2^{A}$ associating a set of arguments to each table name. 
	We denote the set of all 
	relational arg-labelling database schemata  
	by $\SCHEMAit$, and refer to its member 
	by $\Schemait$ with(out) a subscript. 

	We define a relational arg-labelling database 
	for $\Schema \equiv (L, \TABLE , \header)$ 
	to be 
	$db: \TABLE \rightarrow 2^{\Lambda}$ satisfying 
	the following condition\footnote{Readers are cautioned 
	that this superscript is $\Lambda$ and not $A$.}:
	for every $\Table \in \TABLE$ and every 
	$\lambda \in db(\Table)$, it holds that 
	$\domit(\lambda) = \header(\Table)$.  
\end{definition} 
\begin{example}[Relational arg-labelling database]\label{exrelationalarglabellingdatabase}    
	Let us refer to Fig. \ref{fig1} 
	and observe that there are 5 arg-labelling tables 
	in total. The arg-labelling database holding 
	them is expressed with the following 
	relational arg-labelling database schema and 
	relational arg-labelling database with respect to it.  
	Assume $A_1 = A_2 = \{a_G, \ldots, a_A\}$, 
	$A_3 = \{a_E, \ldots, a_A\}$, 
	$A_4 = \{a_G, a_F, a_E\}$, 
	$A_5 = \{a_D, \ldots, a_A\}$, 
	 $\Lambda_1 = 
	\{\lambda_1, \ldots, \lambda_4\}$, 
	$\Lambda_2 = \{\lambda_5,  \lambda_7\}$, 
	$\Lambda_3 = \{\lambda_8\}$, 
	$\Lambda_4 = \{\lambda_9, \lambda_{10}, \lambda_{11}\}$, 
	and $\Lambda_5 = \{\lambda_{12}\}$. 

	Then:  
	        $\Schema$ is $(L, 
		       \{\Tableit_1, \ldots, \Tableit_5\},  
		       \header \equiv \{\Tableit_1 \mapsto A_1, \ldots, 
		       \Tableit_5 \mapsto A_5\})$; 
		       and 
	        $db$ is $\{\Tableit_1 \mapsto \Lambda_1, 
		       \ldots,\linebreak \Tableit_5 \mapsto \Lambda_5\}$. 
	It holds that $\domit(\lambda_1) = A_1 = 
	\header(\Tableit_1)$, and similarly 
	for all the others. (Note: the reason 
	that we chose to have all the arg-labelling 
	tables 
	in the same - that is, in one - arg-labelling database is 
	because we focus more on illustration of our 
	relational calculus in sections 3.1 and 3.2.  
	In practice, 
	which tables should be grouped together varies by one's intention
	(e.g. Example 8 later).) 
	\hfill$\clubsuit$ 
\end{example} 

\subsection{Syntax and semantics of formal query language} 
Syntax of the query language is defined almost
as tuple relational calculus, save 
we introduce 
a dyadic function $\countit$ to count the number 
of a certain label assigned to the header arguments by a labelling. 
This additional function is appropriate 
for an arg-labelling table where any member in the body 
is a labelling, that is, the output of any member is in the same type 
$L$, which assures well-definedness of the counting function 
for every arg-labelling table. We introduce definitions (Definition 3 - 5) and illustrate 
them in an example. 

\begin{definition}[Atomic query formula] 
	Let $V$ be an uncountable set 
	of variables. 
	We define 
	an atomic query formula with respect to 
	 $\Schema \equiv (L, \TABLEit, \header)$ 
	 to be any of the following, so long as it satisfies 
	 the accompanying conditions.   
	 \begin{multicols}{2} 
		 \begin{itemize}[leftmargin=0.3cm]
		 \item $v_1.a_1 \doteq v_2.a_2$ for 
			 $v_1, v_2 \in V$ and 
			 $a_1, a_2 \in \mathcal{A}$. 
		 \item $v_1.a_1 \doteq l\quad \ \ \ $ for 
			 $v_1 \in V$, 
			 $a_1 \in \mathcal{A}$ and 
			 $l \in L$. 
		 \item $\Tableit[v_1] \quad\ \ \ \ \ \ \ $ for 
			 $v_1 \in V$ and 
			 $\Tableit \in \TABLEit$.    
		 \item $\countit(v_1, l_1)\ \dot{\leq}\  
			 \countit(v_2, l_2)$ 
			  for 
			 $l_1, l_2 \in L$. 
		 \item $\countit(v_1, l_1)\ \dot{\leq}\  n\ \ \qquad\qquad $ 
			 for  $l_1 \in L$ and $n \in \mathbb{N}$. 
		 \item $n\ \dot{\leq}\ \countit(v_1, l_1)\ \qquad\qquad$ for $l_1 \in L$ 
			 and $n \in \mathbb{N}$.
	 \end{itemize}  
	 \end{multicols} 
\end{definition} 
\begin{definition}[Query formula]   
	We define a query formula with respect to $\Schemait$ to be any of the following. 
	We may refer to  
	a query formula by $\pmb{F}$ with(out) 
	a subscript.
	\begin{multicols}{2} 
		\begin{itemize}[leftmargin=0.3cm]  
		\item an atomic formula with respect to $\Schemait$. 
		\item $\neg \pmb{F}_1$ if $\pmb{F}_1$ is a formula. 
		\item $\pmb{F}_1 \wedge \pmb{F}_2$ if 
			$\pmb{F}_1$ and $\pmb{F}_2$ are formulas. 
		\item $\pmb{F}_1 \vee \pmb{F}_2$ if 
			$\pmb{F}_1$ and $\pmb{F}_2$ are formulas. 
		\item $\exists v:A[\pmb{F}_1]$ if 
			$A$ is a set of arguments and $\pmb{F}_1$ 
			is a formula. 
		\item $\forall v:A[\pmb{F}_1]$ if 
			$A$ is a set of arguments and $\pmb{F}_1$ 
			is a formula. 
	\end{itemize} 
	\end{multicols} 
	Following the predicate logic convention, we 
	say that a variable $v$ is: free  
	in $\pmb{F}$ iff it is not quantified; bound 
	in $\pmb{F}$ iff it is not free in $\pmb{F}$. 
\end{definition} 
The following semantics of the language is 
fairly standard to the database theory community; 
argumentation people may find the example to follow 
instrumental. 
\begin{definition}[Semantics] 
	Let $\evalit: V \rightarrow \Lambda$ 
	be an interpretation function 
        such that $\evalit(v) \in \Lambda$,
	and let a `semantic structure' be a tuple 
	$(\Schema, db, \evalit)$ for $\Schema$ 
	and $db$ for $\Schema$.  
	We inductively define $(\Schema, db, \evalit) \models \pmb{F}$ for some 
	$(\Schema, db, \evalit)$ and some  
	$\pmb{F}$ as follows. 
	{\small 
	\begin{itemize}[leftmargin=0.3cm]  
		\item $(\Schema, db, \evalit) \models  
			v_1.a_1 \doteq v_2.a_2$ iff   
			$a_1 \in \domit(\evalit(v_1))$ 
			$\andC$ $a_2 \in \domit(\evalit(v_2))$ $\andC$ 
			$\evalit(v_1)(a_1) = \evalit(v_2)(a_2)$.  
		\item $(\Schema, db, \evalit) \models v_1.a_1 \doteq l$ 
			{\ }\quad\ \ iff  $a_1 \in \domit(\evalit(v_1))$ 
			$\andC$ $\evalit(v_1)(a_1) = l$.  
		\item $(\Schema, db, \evalit) \models \Tableit[v_1]$ 
			{\ }\quad\ \ \ \ iff $\evalit(v_1) \in db(\Tableit)$.  
		\item {\small $(\Schema, db, \evalit) \models 
			\countit(v_1, l_1)\ \dot{\leq}\ \countit(v_2, l_2)$} 
			iff {\small $|\{a \in \domit(\evalit(v_1)) \mid \evalit(v_1)(a) = l_1\}| \leq |\{a \in \domit(\evalit(v_2)) \mid \evalit(v_2)(a) = l_2\}|$} 
		\item {\small $(\Schema, db, \evalit) \models \countit(v_1, l_1) \ 
			\dot{\leq}\ n$} iff 
			{\small $|\{a \in \domit(\evalit(v_1)) \mid \evalit(v_1)(a) = l_1\}| \leq n$.} 
		\item  {\small $(\Schema, db, \evalit) \models n\ \dot{\leq}\ 
			\countit(v_1, l_1)$} iff 
			{\small $n \leq 
			|\{a \in \domit(\evalit(v_1)) \mid 
			\evalit(v_1)(a) = l_1\}|$}. 
		\item  $(\Schema, db, \evalit) \models \neg \pmb{F}$ 
			{\ }\quad\quad\quad\ \ \ \ \ \ \ \  iff  
			  $(\Schema, db, \evalit) \not\models \pmb{F}$. 
		  \item $(\Schema, db, \evalit) \models \pmb{F}_1 \wedge \pmb{F}_2$
			  {\ }\quad\ \ \ \ \quad  iff
			  $(\Schema, db, \evalit) \models \pmb{F}_i$ 
			  for every $i \in \{1,2\}$. 
		  \item  $(\Schema, db, \evalit) \models \pmb{F}_1 \vee \pmb{F}_2$
			{\ }\quad\ \ \ \quad iff  
			   $(\Schema, db, \evalit) \models \pmb{F}_i$ 
			  for some $i \in \{1,2\}$. 
		  \item $(\Schema, db, \evalit) \models \exists v:A[\pmb{F}]$
			  {\ }\ \ \ \quad \  iff  
			  there is some $\lambda$ such that 
			  $\domit(\lambda) = A$ and that
			  $(\Schema, db, \evalit') \models \pmb{F}$ 
			  where $\evalit'$ is almost 
			  exactly $\evalit$ except  
			  $\evalit'(v) = \lambda$.  
		  \item $(\Schema, db, \evalit) \models \forall v:A[\pmb{F}]$
			  {\ }\quad\quad iff, 
			  for every $\lambda$, 
			  if $\domit(\lambda) = A$, then 
			  $(\Schema, db, \evalit') \models \pmb{F}$ 
			  where $\evalit'$ is almost exactly 
			  $\evalit$ except $\evalit'(v) = \lambda$.
	\end{itemize} 
	}
	We say that $(\Schema, db, \evalit)$ models 
	$\pmb{F}$ iff $(\Schema, db, \evalit) \models \pmb{F}$. 
\end{definition}  
For atomic formulas, 
$v_1.a_1 \doteq v_2.a_2$ 
			tests whether
			the label of $a_1$ 
		        assigned by $\lambda_x \equiv \evalit(v_1)$ 
			and that of $a_2$ assigned by $\lambda_y 
			\equiv \evalit(v_2)$ 
			matches. The first two attached conditions 
			force 
			$a_1 \in \domit(\lambda_x)$ 
			and $a_2 \in \domit(\lambda_y)$.   
			$v_1.a_2 \doteq l$ tests 
			whether the label of $a_1$  
			assigned by $\lambda_x \equiv \evalit(v_1)$ 
			is $l$. $\Tableit[v_1]$ tests 
			whether $\lambda_x \equiv \evalit(v_1)$ 
			is in the body of the arg-labelling table 
			$db(\Tableit)$. 
			$\countit(v_1, l_1)\ \dot{\leq} \ 
			\countit(v_2, l_2)$ compares 
			the number of arguments assigned $l_1$ 
			by $\lambda_1 \equiv \evalit(v_1)$ 
			and that of arguments assigned $l_2$ 
			by $\lambda_2 \equiv \evalit(v_2)$, 
			and similarly for the other two 
			atomic formulas involving 
			$\countit$. 
                        
\begin{example}[Formal language of queries]
	We assume the arg-labelling tables in our running example 
	(see Fig. \ref{fig1}). 
	\begin{itemize} 
		\item $\Tableit_1[v_1] \wedge \Tableit_2[v_2] 
			\wedge v_1.a_{F} \doteq v_2.a_B$ 
			signifies firstly that 
			$\lambda_x \equiv \evalit(v_1)$  
			is in the arg-labelling table $\Tableit_1$, 
			secondly that $\lambda_y \equiv \evalit(v_2)$ 
			is in the arg-labelling table $\Tableit_2$, 
			and thirdly that 
			$\lambda_x(a_{F}) = \lambda_y(a_B)$.  
			Compare $\Tableit_1$ and $\Tableit_2$; 
			for any labelling $\lambda$ in $\Tableit_1$, 
			$\lambda(a_F) = \inL$, 
			and for any labelling $\lambda'$ in $\Tableit_2$, 
			$\lambda'(a_k) = \inL$ iff 
			$\lambda' = \lambda_7$. This means 
			that, for any $\evalit$, 
			$(\Schemait, db, \evalit)$ models 
			this query formula 
			iff ${(\lambda_x =)}\ \evalit(v_1) \in \{\lambda_1, \ldots, 
			\lambda_4\}$ $\andC$ 
			${(\lambda_y =)}\ \evalit(v_2) = \lambda_7$.   
		\item  $\Tableit_4[v_2]  
			\wedge \exists v_1:\{a_{G},\ldots, a_A\}[\Tableit_2[v_1] 
			\wedge v_1.a_{E} \doteq 
			v_1.a_{A} \wedge 
			 v_2.a_{E} \doteq v_1.a_F]$ signifies 
			the following: for 
			some $\lambda_x \equiv \evalit(v_1)$ in the body 
			of $\Table_2$ with $\lambda_x(a_{E}) = 
			\lambda_x(a_{A})$,   
			$\lambda_y \equiv \evalit(v_2)$ 
			is such that firstly 
			$\lambda_y$ is in the body of 
			$\Table_4$ and 
			secondly $\lambda_y(a_E) = \lambda_x(a_F) (= \inL)$. 
			There is only one labelling 
			in $\Table_4$, $\lambda_{10}$, to be 
			the $\lambda_y$. 
			This means that, for any $\evalit$, 
			$(\Schemait, db, \evalit)$ models 
			this query formula iff ${(\lambda_y =)}\ 
			\evalit(v_2) = \lambda_{10}$. 
		\item $\Tableit_2[v_1] \wedge (3 \ \dot{\leq}\ 
			\countit(v, \inL) \vee 4 \ \dot{\leq}\ \countit(v, \undecL))$ signifies firstly that $\lambda_x \equiv \evalit(v_1)$ 
			is in the arg-labelling table $\Tableit_2$, 
			and secondly that 
			$\lambda_x$ assigns 
			 $\inL$ to at least 3 or $\undecL$ to at least 
			 4 arguments in 
			 $\domit(\lambda_x)$. 
			 This means that, for any $\evalit$, 
			 $(\Schemait, db, \evalit)$ models 
			 this query formula iff 
			 $\evalit(v) \in \{\lambda_5, \lambda_7\}$.
	\end{itemize}  
	In the above examples, take particular notice of the purpose of 
	$\Tableit[v]$ atomic formulas acting to restrict 
	the domain of a labelling to the header arguments 
	of the referred arg-labelling table. \hfill$\clubsuit$
\end{example} 
\subsection{Relational arg-labelling semantics} 
\begin{definition}[Relational arg-labelling semantics] 
      We define an arg-labelling query 
	to be 
	the following expression:  
	$\{v : A \mid \pmb{F}\}$, 
	whereby the only free variable in $\pmb{F}$ is $v$.   
      
        We define the semantics of 
	a query $\{v:A \mid \pmb{F}\}$ 
	with respect to some $\Schemait$ and $db$ to be 
	the set of  
	all $\lambda$ satisfying 
	$(\Schemait, db, \evalit) \models 
	\pmb{F}$ with $\evalit(v) = \lambda$, 
	and denote it by $\norm{\{v:A \mid \pmb{F}\}}$. 
	We call any $\norm{\{v:A \mid \pmb{F}\}}$ with respect to 
	$\Schemait$ and $db$ 
	a relational arg-labelling semantics with respect to 
	$\Schemait$ and $db$. 
\end{definition} 
Basic database queries such as selecting rows 
of a table, selecting columns of a table, 
and joining two tables, 
produce certain effects on 
arg-labelling database with associated 
arg-labelling tables.   

\begin{example}[Selecting columns of a single arg-labelling table]\label{exselectioncolumn} 
	In Fig. \ref{fig1}, 
	$\Tableit_4$ (more precisely 
	the arg-labelling table referred to by $\Tableit_4$) 
	is the result of selecting 3 columns $a_{G}$, $a_F$ 
	and $a_E$ 
	of $\Tableit_2$. 
	The corresponding relational arg-labelling 
	semantics is $\norm{\{v: \{a_{G}, a_{F}, a_E\} \mid 
	\Tableit_2[v]\}}$ with respect to 
	the same $\Schema$ and $db$ in Example 
	\ref{exrelationalarglabellingdatabase}.
To see to it, 
	we firstly enumerate all $\lambda$ 
	with $(\Schemait, db, \evalit) \models \Tableit_2[v]$ 
	for $\evalit(v) = \lambda$, 
	which are $\lambda_1, \ldots,$ and $\lambda_4$;
	for each of them, we force the domain to $\{a_G, a_F, a_E\}$, 
	to obtain $\lambda_9, \lambda_{10}$ and $\lambda_{11}$. 
	We saw in Section 2 that both explanation 
	and multi-agent semantics share the process 
	of restricting attention to a subset of arguments.
	\hfill$\clubsuit$ 
\end{example}  
In the remaining, we let $\bigwedge_{1 \leq i \leq n} 
	v.a_i \doteq l_i$ abbreviate 
	$v.a_1 \doteq l_1 \wedge \ldots \wedge 
	v.a_n \doteq l_n$.  
\begin{example}[Selecting rows of a single arg-labelling table]\label{exselectingrows}
	Classic argumentation semantics is defined with row selection 
	from complete semantics. With $\Table_2$, 
	{\small $\norm{\{v: \{a_G, \ldots, a_A\} \mid  \Tableit_2[v]
	 \wedge 
	\neg 1 \dot{\leq} \countit(v, \undecL)\}}$} with respect to the same 
	$\Schemait$ and $db$ is the stable semantics. 
	 \hfill$\clubsuit$ 
\end{example} 
\hide{ 
\begin{figure}[!t]  
	\scalebox{0.88}{ 
	\hspace{-1cm} 
         \begin{tabular}{c c | c c c c c c} 
             $\Table_3$ & & $a_1$ & $a_2$ & \ldots & $a_{k-2}$ & $a_{k-1}$ & $a_k$\\\hline
		$\lambda''_1$ & 
		 & \inL  & \inL &  \ldots & \inL & \inL & \inL\\  
		$\lambda''_4$ & &\inL & \inL & \ldots & \undecL & \undecL & \undecL 
	 \end{tabular}  
	 {\ }\qquad 
    	\begin{tabular}{c c | c c c c c c} 
             $\Table_4$ & & $a_1$ & $a_2$ & \ldots & $a_{k-2}$ & $a_{k-1}$ & $a_k$\\\hline
		$\lambda_p$ & 
		 & \inL  & \inL &  \ldots & \inL & \inL & \inL\\  
		$\lambda_q$ & &\undecL & \undecL & \ldots & \undecL & \undecL & \undecL 
	 \end{tabular}  
	 {\ }\qquad 
         \begin{tabular}{c c | c} 
             $\Table_5$ & & $a'_1$\\\hline
		$\lambda_r$ & 
		 & \inL  \\ 
		 $\lambda_s$ & 
		 & \undecL
	 \end{tabular}  

	} 
	{\ }\\\\\\
	\centering 
	\scalebox{0.88}{ 
	\begin{tabular}{c c | c c c c c c }
		$\Table_{\vee}$ & & $a_1$ & $a_2$ & \ldots & $a_{k-2}$ & $a_{k-1}$ & $a_k$\\\hline
		$\lambda''_1$ & 
		 & \inL  & \inL &  \ldots & \inL & \inL & \inL\\  
		$\lambda''_4$ & &\inL & \inL & \ldots & \undecL & \undecL & \undecL \\
		$\lambda''_2$ & &\undecL & \undecL & \ldots & \undecL 
		& \undecL & \undecL
	\end{tabular} 
	{\ }\ \ \ 
	\begin{tabular}{c c | c c c c c c }
		$\Table_{\wedge}$ & & $a_1$ & $a_2$ & \ldots & $a_{k-2}$ & $a_{k-1}$ & $a_k$\\\hline
		$\lambda''_1$ & 
		 & \inL  & \inL &  \ldots & \inL & \inL & \inL 
	\end{tabular}  
	{\ }\ \ \ 
\begin{tabular}{c c | c c c c c c }
	$\Table_{/}$ & & $a_1$ & $a_2$ & \ldots & $a_{k-2}$ & $a_{k-1}$ & $a_k$\\\hline
		$\lambda''_4$ & &\inL & \inL & \ldots & \undecL & \undecL & \undecL 
	\end{tabular}  
	}
		\caption{\textbf{Top 3 tables}: (the arg-labelling 
		table referred to by) $\Table_3$ relisted, 
		and other two arg-labelling tables. 
		\textbf{Bottom 3 tables}: 
			examples of set union, set intersection, 
		and set difference of two arg-labelling tables  
		(referred to by) $\Table_3$ and $\Table_4$.} 
\end{figure}  
} 

\noindent To put together multiple tables based on some constraints, 
there are joins. 
\hide{ 
\begin{figure}[!t]  
	\hspace{1cm} 
	\scalebox{0.88}{  
	\centering  
	\begin{tabular}{c c | c c c c c c c} 
		$\Table_{\times}$ & & $a_1$ & $a_2$ & $\ldots$ &  
		$a_{k-2}$ & $a_{k-1}$ & $a_k$ & $a'_1$ \\\hline 
		$\lambda_{x_1}$ & 
		 & \inL  & \inL &  \ldots & \inL & \inL & \inL & \inL\\ 
		 $\lambda_{x_2}$ & 
		 & \inL  & \inL &  \ldots & \inL & \inL & \inL & \undecL\\ 
		 $\lambda_{x_3}$ & 
		 & \inL  & \inL &  \ldots & \undecL & \undecL & \undecL & 
		 \inL\\ 
		 $\lambda_{x_4}$ & 
		 & \inL  & \inL &  \ldots & \undecL & \undecL & \undecL & 
		 \undecL\\ 
	 \end{tabular}  
	{\ }\qquad  
	\begin{tabular}{c c | c c c c c c c} 
		 $\Table_{\bowtie_{\theta}}$ & & $a_1$ & $a_2$ & \ldots & $a_{k-2}$ & $a_{k-1}$ & $a_k$ & $a'_1$ \\\hline
		 $\lambda'_{x_1}$ & 
		 & \inL  & \inL &  \ldots & \inL & \inL & \inL & \inL\\  
		 $\lambda'_{x_2}$ & &\inL & \inL & \ldots & \inL & \inL & 
		 \inL & \undecL  
	 \end{tabular}    
	 } 
	 {\ }\\\\\\ 
	 \hspace{3cm}  
	 \centering 
    	\begin{tabular}{c c | c c c c c c c} 
		$\Table_{\bowtie}$ & & $a_1$ & $a_2$ & \ldots & $a_{k-2}$ 
		& $a_{k-1}$ & $a_k$ & $a'_2$\\\hline
		$\lambda_{y_1}$ & 
		 & \inL  & \inL &  \ldots & \inL & \inL & \inL & \outL\\  
	 \end{tabular}  
		\caption{Examples of Cartesian product $\Table_{\times}$, 
		condition join $\Table_{\bowtie_{\theta}}$,
		and natural join $\Table_{\bowtie}$. 
		 } 
\end{figure} 
}
\begin{example}[Condition join]\label{exconditionjoin} 
	Let us say, we like to join $\Table_4$ and $\Table_5$ 
	in such a way that $\lambda$ with $\domit(\lambda) = 
	\{a_G, \ldots, a_A\}$ is in the resultant table's body 
	iff there are some $\lambda_x \in db(\Table_4)$ 
	and some $\lambda_y \in db(\Table_5)$ 
	such that $\lambda_x(a_i) = \lambda(a_i)$ 
	and $\lambda_y(a_j) = \lambda(a_j)$ hold for 
	every $i \in \{a_G, a_F, a_E\}$ and every 
	$j \in \{a_D, \ldots, a_A\}$ and that 
	$\lambda$ is in the body of $\Table_2$. 
	The relational arg-labelling semantics 
	of this condition join is 
	$\norm{\{v: \{a_G, \ldots, a_A\} \mid \Tableit_2[v] 
	\wedge \exists 
	v_1:\{a_G, a_F, a_E\}[\Tableit_4[v_1] \wedge  
	(\bigwedge_{i \in \{G, \ldots, E\}}v.a_i \doteq v_1.a_i)
	\wedge 
	 \exists v_2:\{a_D, \ldots, a_A\}[\Tableit_5[v_2] \wedge
	 \bigwedge_{i \in \{D, \ldots, A\}}v.a_i \doteq v_2.a_i]]\}}$.  
	 This, incidentally, forms the multi-agent semantics  
	 of Section 2.  
	 \hfill$\clubsuit$ 
\end{example} 

Though we do not highlight here for space, there are 
other types of joins, too, such as 
outer joins, as well as renaming, division and typical 
set-theoretic operations (union, intersection, difference). 
Readers are referred to 
\cite{Ramakrishnan02,Kolaitis09}.  

\subsection{Consequence of the perspective 
of argumentation semantics as a relational database}
We have provided a fresh perspective 
of: traditional argumentation semantics 
as a relational database. 
Since our relational calculus 
is tuple relational calculus plus a specific counting function, 
every sensible query (technically, domain independent queries
\cite{Ramakrishnan02,Kolaitis09}) that is possible within the calculus 
can be queried in SQL that is readily available for use.  

For theoretical implications, it offers a unifying relational 
point of view to 
 argumentation semantics. For a start,  weaker or stronger, whichever 
 sense it is, we can uniformly define classic argumentation 
 semantics. 
\begin{proposition}[Classic argumentation semantics
	as  relational arg-labelling semantics]
	Given $(A, R)$, assume  
	$\Schema \equiv (L, \{\Tableit_{com}\}, 
	\header)$ and $db$ such that 
	$\header(\Tableit_{com}) = A$, and that  
	 $db(\Tableit_{com}) = \Lambda^{com}_{(A, R)}$. 
	Then, 
	complete / preferred / grounded / stable semantics of $(A, R)$ 
	is a relational arg-labelling semantics with respect to 
	$\Schema$ and $db$.
\end{proposition}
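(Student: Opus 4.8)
The plan is to exhibit, for each $X \in \{$Complete, Preferred, Grounded, Stable$\}$, a query formula $\pmb{F}_X$ whose only free variable is $v$, and then to verify that $\norm{\{v:A\mid \pmb{F}_X\}}$ with respect to the stipulated $\Schema$ and $db$ is exactly the $X$ semantics of $(A,R)$. Every $\pmb{F}_X$ is built on top of the single primary table $\Tableit_{com}$, whose body is by hypothesis precisely $\Lambda^{com}_{(A,R)}$; in particular the defence structure of $(A,R)$ is already encapsulated there, which is what makes the construction possible at all given that the query language cannot mention $R$. The complete case is immediate: take $\pmb{F}_{com} := \Tableit_{com}[v]$; by the semantics clause for $\Tableit[v]$, $\norm{\{v:A\mid \Tableit_{com}[v]\}} = db(\Tableit_{com}) = \Lambda^{com}_{(A,R)}$.

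For preferred and grounded I would first compile $\preceq$ into a quantifier-free formula $\phi_{\preceq}(v,v')$ with free variables $v,v'$. Since $A$ and $L$ are finite, this is routine: the domain clause of $\preceq$ is automatic for labellings of domain $A$; the ``same occurring labels'' clause is $\bigwedge_{l\in L}\big[(1\ \dot{\leq}\ \countit(v,l))\leftrightarrow(1\ \dot{\leq}\ \countit(v',l))\big]$, since $1\ \dot{\leq}\ \countit(v,l)$ says $l$ occurs in $\evalit(v)$; and the third clause is $\bigwedge_{a\in A}\big[v.a\doteq\undecL\ \vee\ v.a\doteq l_c\ \vee\ v.a\doteq v'.a\big]$, saying that wherever $\evalit(v)$ is decided, $\evalit(v')$ agrees (here $\leftrightarrow$ is the usual abbreviation over $\neg,\wedge,\vee$). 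Thus, whenever $\evalit$ sends $v,v'$ to labellings of domain $A$, $(\Schema,db,\evalit)\models \phi_{\preceq}(v,v')$ iff $\evalit(v)\preceq\evalit(v')$. Now put
\[
\pmb{F}_{pref}\ :=\ \Tableit_{com}[v]\ \wedge\ \neg\,\exists v'\!:\!A\big[\,\Tableit_{com}[v']\ \wedge\ \phi_{\preceq}(v,v')\ \wedge\ \neg\,\phi_{\preceq}(v',v)\,\big],
\]
\[
\pmb{F}_{grd}\ :=\ \Tableit_{com}[v]\ \wedge\ \forall v'\!:\!A\big[\,\neg\,\Tableit_{com}[v']\ \vee\ \phi_{\preceq}(v,v')\,\big].
\]
The labelling-quantifiers range over all labellings of domain $A$, so the guard $\Tableit_{com}[v']$ is what restricts them to the rows of the primary table. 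Matching the definitions: a labelling satisfies $\pmb{F}_{pref}$ iff it is complete and there is no complete labelling strictly $\preceq$-above it, which --- using that $\preceq$ is antisymmetric on comparable pairs --- is exactly ``complete and $\preceq$-maximal in $(\Lambda^{com}_{(A,R)},\preceq)$'', i.e. preferred; and a labelling satisfies $\pmb{F}_{grd}$ iff it is complete and $\preceq$-below every complete labelling, i.e. it is the $\preceq$-least complete labelling, which, being the greatest lower bound of $(\Lambda^{com}_{(A,R)},\preceq)$ and itself complete, is the grounded labelling, so $\norm{\{v:A\mid\pmb{F}_{grd}\}}$ is the grounded semantics.

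For stable I would use the row selection already exhibited in Example~\ref{exselectingrows},
\[
\pmb{F}_{stb}\ :=\ \Tableit_{com}[v]\ \wedge\ \neg\,(1\ \dot{\leq}\ \countit(v,\undecL)),
\]
together with the standard labelling characterisation (\cite{Caminada06}): a complete labelling is stable exactly when it assigns $\undecL$ to no argument, and every such labelling is automatically $\preceq$-maximal, hence preferred, which reconciles it with the paper's ``preferred plus stability clause'' definition. Then $\norm{\{v:A\mid\pmb{F}_{stb}\}}$ is the set of complete labellings that use no $\undecL$, i.e. the stable semantics.

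The bulk of the remaining work is routine but must be done carefully: unfolding $\preceq$ and the stability clause into finite conjunctions over the fixed $A$ and $L$, desugaring the derived connectives, checking that each $\pmb{F}_X$ has $v$ as its sole free variable, and --- the step most easily mishandled --- guarding the labelling-quantifiers $\exists v'\!:\!A$ and $\forall v'\!:\!A$ by $\Tableit_{com}[v']$ so that they effectively range over table rows rather than over arbitrary labellings. The one genuine external ingredient, which I expect to be the main obstacle, is the classical part: the $\undecL$-free characterisation of stable labellings and the fact that the complete labellings have a $\preceq$-least element which is itself complete (the grounded labelling). These are exactly the points where the argument has to reach outside the relational formalism it is meant to showcase --- not hard, but unavoidable.
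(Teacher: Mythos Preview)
Your approach is essentially the paper's: for each semantics, exhibit an explicit query over the single primary table $\Tableit_{com}$ and check that its answer set is the desired labelling set. The details differ in two places worth noting. First, you compile the full $\preceq$ relation (including the ``same occurring labels'' clause via $\countit$) and then express preferred as ``complete with no strict $\preceq$-successor''; the paper instead writes a direct formula using only the third clause of $\preceq$ and phrases maximality as ``every complete $v_2$ agrees with $v$ on its decided positions''. Your version is more faithful to the stated definition of $\preceq$ and handles incomparability explicitly, at the cost of a longer formula. Second, your stable query $\Tableit_{com}[v]\wedge\neg(1\ \dot{\leq}\ \countit(v,\undecL))$ covers only the three-valued (Caminada) sense; the paper's proposition is meant to apply uniformly to \emph{either} sense of complete labellings, so its proof additionally conjoins $\neg(1\ \dot{\leq}\ \countit(v,l_c))$ to also rule out the weak-sense complement label. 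That is a one-conjunct fix, but as written your $\pmb{F}_{stb}$ does not deliver the stable semantics when $db(\Tableit_{com})$ is instantiated with weak-sense complete labellings.
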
  
\begin{proof}  
       Vacuous for complete semantics; we need only select 
	all members of the body of $\Table_x$. See Example \ref{exselectingrows} 
	for row selection.  
	For preferred semantics, we need only ensure to 
	drop all rows that are not maximal in 
	\mbox{$(\Lambda^{com}_{(A, R)}, \preceq)$}. 
	The corresponding relational 
	arg-labelling semantics 
	is: $\norm{\{v: A \mid \exists v_1:A[\Tableit_x[v_1] 
	\wedge \bigwedge_{a_i \in A}v.a_i \doteq v_1.a_i]
	\wedge \forall v_2:A[\neg \Tableit_x[v_2] 
	\vee \bigwedge_{i \in A}((\neg v_2.a_i \doteq \inL
	\wedge \neg v_2.a_i \doteq \outL) 
	\vee v_2.a_i \doteq v.a_i)]\}}$. 
	For grounded semantics, we need to derive the meet of   
	$\Lambda^{com}_{(A, R)}$ in $\preceq$, but this is precisely the 
	relational arg-labelling semantics: 
	$\norm{\{v: A \mid  \bigwedge_{a_i \in A}(
	((\neg v.a_i \doteq \inL \wedge \neg v.a_i \doteq \outL) \vee 
	\forall v_1:A[\neg \Tableit_x[v_1] 
	\vee v.a_i \doteq v_1.a_i]) \wedge ((\neg v.a_i \doteq \undecL 
	\wedge \neg v.a_i \doteq l_c) 
	\vee \exists v_1:A[\Table_x[v_1] \wedge (v_1.a_i \doteq \undecL \vee 
	v_1.a_i \doteq l_c \vee 
	\exists v_2:A[\Tableit_x[v_2] \wedge \neg v_1.a_i \doteq v_2.a_i])]))\}}$.   For stable semantics, $\norm{\{v: A \mid 
	\Tableit_x[v] \wedge \neg 1 \dot{\leq} \countit(v, \undecL) 
	\wedge \neg 1 \dot{\leq} \countit(v, l_c)\}}$. 
	\hfill$\Box$ 
\end{proof}   
This serves to underscore how derivation of some 
semantics requires the 
underlying argumentation graph, and how, on the other hand, 
other semantics only require comparisons of labels 
with no further reference to the argumentation graph, 
for which it makes a perfect sense to adopt the relational 
perspective. 

The relational perspective goes beyond reformulation,  
contributing to generalisation. As we saw in Section 2, 
some argumentation semantics (explanation, multi-agent) intend to 
derive a view 
covering only a part of an argumentation graph in order 
to compute the final result. We can understand it as: 
\begin{proposition}[Partial semantics] 
	Given $(A, R)$, 
	assume some semantics of it: $\Lambda^{\sem}_{(A, R)}$ 
	such that $\lambda \in \Lambda^{\sem}_{(A, R)}$ 
	satisfies $\domit(\lambda) = A$. 
	Let us say that $\Lambda_1$ is 
	a partial semantics of $\Lambda^{\sem}_{\FFMMA}$ 
	iff all the following conditions hold. 
	\begin{enumerate} 
		\item  There is some $A_1 \subseteq A$ 
			such that, for every $\lambda_x \in \Lambda_1$,
			$\domit(\lambda_x) = A_1$ holds. 
		\item For every $\lambda_x \in \Lambda_1$ 
	and for every $a_1 \in A_1$,  
	there is some $\lambda \in \Lambda^{\sem}_{\FFMMA}$ 
	such that $\lambda_x(a_1) = \lambda(a_1)$. 
		\item For every $\lambda \in \Lambda^{\sem}_{\FFMMA}$ 
			and for every $a_1 \in A_1$, 
			there is some $\lambda_x \in 
			\Lambda_1$ such that 
			$\lambda_x(a_1) = \lambda(a_1)$.  
	\end{enumerate}   
        
	Every partial semantics of $\Lambda^{\sem}_{\FFMMA}$ 
	is a relational arg-labelling semantics 
	with respect to 
	$\Schema \equiv (L, \{\Tableit_{\sem}\} \cup \TABLEit, \header)$ and $db$ such that 
	$\header(\Tableit_{\sem}) = A$ 
	and $db(\Tableit_{\sem}) = \Lambda^{\sem}_{\FFMMA}$. 
	Assume $\TABLEit$ is some set of table names. 
\end{proposition}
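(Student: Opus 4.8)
The plan is to produce, for a given partial semantics $\Lambda_1$ of $\Lambda^{\sem}_{\FFMMA}$, an explicit \emph{enumeration query} whose relational arg-labelling semantics is exactly $\Lambda_1$. The first step is to note that $\Lambda_1$ is finite: $A$ is finite and (as throughout, and in every example) $L$ is finite, so there are only finitely many labellings of domain $A_1$, and $\Lambda_1$ — being a set of such by condition~1 — is one. (If one prefers not to assume $L$ finite, condition~2 confines every value $\lambda_x(a_1)$, $\lambda_x \in \Lambda_1$, $a_1 \in A_1$, to the finite set $\{\lambda(a_1) \mid \lambda \in \Lambda^{\sem}_{\FFMMA}\}$, so $\Lambda_1$ is still finite as long as $\Lambda^{\sem}_{\FFMMA}$ is.) Enumerate $\Lambda_1 = \{\lambda_{x_1},\ldots,\lambda_{x_m}\}$; by condition~1, $\domit(\lambda_{x_k}) = A_1$ for each $k$. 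Note that a plain projection query of the kind in Example~\ref{exselectioncolumn} will not do in general: a partial semantics need not be the projection of $\Lambda^{\sem}_{\FFMMA}$ onto $A_1$, because conditions~2 and~3 only fix the per-argument value sets and not the tuples themselves.

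Next I would assemble the query. For each $k \in \{1,\ldots,m\}$ put $\pmb{F}_k := \bigwedge_{a \in A_1} v.a \doteq \lambda_{x_k}(a)$, a legitimate query formula with sole free variable $v$, since each $\lambda_{x_k}(a) \in L$ and so each conjunct is an atomic formula $v.a \doteq l$. Unfolding the semantics clauses for $\wedge$ and $v.a \doteq l$, a labelling $\lambda$ satisfies $\pmb{F}_k$ under $\evalit(v) = \lambda$ exactly when $A_1 \subseteq \domit(\lambda)$ and $\lambda(a) = \lambda_{x_k}(a)$ for all $a \in A_1$; hence, once the query bracket $\{v : A_1 \mid \cdot\}$ restricts each model down to domain $A_1$ (the ``force the domain to $A_1$'' step of Example~\ref{exselectioncolumn}), the only labelling $\pmb{F}_k$ contributes is $\lambda_{x_k}$ itself. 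Taking $\pmb{F} := \pmb{F}_1 \vee \cdots \vee \pmb{F}_m$ (and, when $m = 0$, letting $\pmb{F}$ be any unsatisfiable body, e.g. $v.a \doteq l \wedge \neg v.a \doteq l$), the clause for $\vee$ gives $\norm{\{v : A_1 \mid \pmb{F}\}} = \bigcup_{k=1}^{m}\{\lambda_{x_k}\} = \Lambda_1$. As $\pmb{F}$ has $v$ as its only free variable, $\{v : A_1 \mid \pmb{F}\}$ is an arg-labelling query; it pins down the fixed finite set $\Lambda_1$ of domain-$A_1$ labellings, hence is domain independent; and it mentions no table name, so it witnesses $\Lambda_1$ as a relational arg-labelling semantics with respect to \emph{every} admissible $\Schema \equiv (L, \{\Tableit_{\sem}\} \cup \TABLEit, \header)$ and $db$ with $\header(\Tableit_{\sem}) = A$ and $db(\Tableit_{\sem}) = \Lambda^{\sem}_{\FFMMA}$, for an arbitrary set $\TABLEit$ of further table names.

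I do not expect a deep obstacle. The two points needing a little care are (i) the finiteness of $\Lambda_1$ — this is the sole place where a hypothesis beyond condition~1 is used, namely condition~2 with finiteness of $\Lambda^{\sem}_{\FFMMA}$; conditions~2 and~3 otherwise serve to delimit what deserves to be called a partial semantics rather than to drive the expressibility — and (ii) the interaction with the domain-forcing convention of $\{v : A_1 \mid \cdot\}$, i.e. checking that the restriction to $A_1$ of every model of $\pmb{F}_k$ is $\lambda_{x_k}$, which is immediate from the clause for $v.a \doteq l$. I would also remark that a shorter argument is available if one may populate $\TABLEit$ freely: install a fresh $\Tableit_{\Lambda_1} \in \TABLEit$ with $\header(\Tableit_{\Lambda_1}) = A_1$ and $db(\Tableit_{\Lambda_1}) = \Lambda_1$ (legitimate by condition~1), and use $\{v : A_1 \mid \Tableit_{\Lambda_1}[v]\}$; the enumeration query above has the merit of being a single formula, referring only to the argument names and the labels appearing in $\Lambda_1$, that works verbatim for every admissible $\Schema$ and $db$.
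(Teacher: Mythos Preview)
Your proof is correct and follows a genuinely different route from the paper's. The paper's own argument asserts that a partial semantics is obtained by column selection: it claims $\Lambda_1 = \norm{\{v : A_1 \mid \Tableit_{\sem}[v]\}}$, i.e., that $\Lambda_1$ coincides with the projection of $\Lambda^{\sem}_{\FFMMA}$ onto $A_1$. You correctly observe that conditions~2 and~3 are stated per argument (the witnessing $\lambda$ may depend on $a_1$), so this identification fails in general; your remark that ``a plain projection query \ldots\ will not do'' in fact exposes a gap in the paper's proof relative to the definition as written (for instance, with $\Lambda^{\sem} = \{(\inL,\outL),(\outL,\inL)\}$ over $A_1 = A = \{a,b\}$, the set $\{(\inL,\inL),(\outL,\outL)\}$ satisfies all three conditions yet is not the projection). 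Your enumeration query sidesteps the problem by not using $\Tableit_{\sem}$ at all and simply listing the finitely many members of $\Lambda_1$; it is more robust and works uniformly for every admissible $\Schema$ and $db$. What the paper's approach buys, when the quantifiers in conditions~2--3 are read in the (presumably intended) order $\forall\lambda_x\,\exists\lambda\,\forall a_1$ and dually, is a single schema-referencing query that exhibits the partial semantics as a \emph{derived view} of $\Tableit_{\sem}$---which is the conceptual point the proposition is meant to convey; your argument establishes expressibility cleanly but does not tie the witness back to $\Tableit_{\sem}$. Your alternative of installing a fresh table $\Tableit_{\Lambda_1}$ in $\TABLEit$ is also valid under the proposition's hypotheses and is the shortest correct route.
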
 
\begin{proof} 
       The defined partial semantics is selection by column in the relational
	perspective; see \mbox{Example} \ref{exselectioncolumn}. 
	Indeed, assume an arbitrary $A_1 \subseteq A$
	and assume that $\Lambda_1$ 
	is a partial semantics of $\Lambda^{\sem}_{\FFMMA}$. 
	Then $\Lambda_1 = 
	\norm{\{v: A_1 \mid \Tableit_{\sem}[v]\}}$. 
	\hfill$\Box$ 
\end{proof} 
In Fig. \ref{fig1}, $\Tableit_4$ is a partial semantics 
of $\Tableit_2$ which is a complete semantics of the argumentation 
graph of the example, and 
$\Tableit_5$ is a partial semantics 
of stable semantics of the argumentation graph. 
Together with the observation in Ex. \ref{exconditionjoin},
we see that multi-agent semantics is a relational arg-labelling 
semantics.

\begin{proposition}[Dependent semantics] 
     Given $\FFMMA$,  
	assume some semantics  
	of it, say 
	$\Lambda^{\sem}_{\FFMMA}$ where 
	$\domit(\lambda) = A$ 
	for every $\lambda \in \Lambda^{\sem}_{(A, R)}$, 
	and some $\Lambda_x$ where 
	for every $\lambda_x \in \Lambda_x$, 
	there is some $A_1 \subseteq A$ such that 
	$\domit(\lambda_x) = A_1$. 
	Let us say that $\Lambda_y$ is 
	a dependent semantics of  
	$\Lambda^{\sem}_{\FFMMA}$
	 with respect to  $\Lambda_x$ 
		iff $\Lambda_y \subseteq \Lambda^{\sem}_{(A, R)}$ holds 
		and also, for any $\lambda_y \in \Lambda_y$, 
	there is some $\lambda_x \in \Lambda_x$
			such that $\lambda_x(a_1) = \lambda_y(a_1)$ 
			for every $a_1 \in A_1$. 
	
	Every dependent semantics of $\Lambda^{\sem}_{\FFMMA}$ with 
	respect to 
	$\Lambda_x$  is a relational 
	arg-labelling semantics with respect to 
	$\Schema \equiv (L, \{\Tableit, \Tableit_x\} \cup \TABLEit, \header)$ 
	and $db$ with: $\header(\Tableit) = A$; 
	$\header(\Tableit_x) = A_1$; 
	$db(\Tableit) = \Lambda^{\sem}_{(A, R)}$; and 
	$db(\Tableit_x) = \Lambda_x$. Assume $\TABLEit$ is some 
	set of table names. 
\end{proposition}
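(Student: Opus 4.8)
The plan is to realise any dependent semantics as a \emph{semi-join} of $\Tableit$ with $\Tableit_x$ on the shared arguments $A_1$: keep exactly those rows of $\Tableit$ whose restriction to $A_1$ occurs as a row of $\Tableit_x$. The candidate relational arg-labelling semantics is therefore
\[
\norm{\{v : A \mid \Tableit[v] \wedge \exists v_1 : A_1[\Tableit_x[v_1] \wedge \bigwedge_{a_i \in A_1} v.a_i \doteq v_1.a_i]\}}
\]
with respect to the $\Schema$ and $db$ fixed in the statement (the auxiliary table names in $\TABLEit$ never occur in the formula). First I would unfold Definition 5 on this formula. The conjunct $\Tableit[v]$ forces $\evalit(v) \in db(\Tableit) = \Lambda^{\sem}_{(A,R)}$, hence $\domit(\evalit(v)) = A$ by Definition 2. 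In the existential conjunct the $\exists$-clause supplies some $\lambda$ with $\domit(\lambda) = A_1$; then $\Tableit_x[v_1]$ forces $\lambda \in db(\Tableit_x) = \Lambda_x$, and $\bigwedge_{a_i \in A_1} v.a_i \doteq v_1.a_i$ forces $\evalit(v)(a_i) = \lambda(a_i)$ for every $a_i \in A_1$ (the domain side-conditions of $\doteq$ hold since $A_1 \subseteq A = \domit(\evalit(v))$ and $\domit(\lambda) = A_1$). As every member of $\Lambda_x$ already has domain $A_1$, the witness $\lambda$ can only be $\evalit(v)|_{A_1}$, so the existential conjunct holds precisely when $\evalit(v)|_{A_1} \in \Lambda_x$.

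Next I would settle the two inclusions between $\Lambda_y$ and this query result, recalling (Definition 6) that the latter is the set of all $\mu$ for which the formula is modelled under an interpretation sending $v$ to $\mu$. For $\Lambda_y$ contained in the query result: given $\lambda_y \in \Lambda_y$, the definition of a dependent semantics yields $\lambda_y \in \Lambda^{\sem}_{(A,R)}$ together with a $\lambda_x \in \Lambda_x$ agreeing with $\lambda_y$ on all of $A_1$; since both functions have domain $A_1$, $\lambda_x = \lambda_y|_{A_1}$, so by the previous paragraph the formula is modelled under $v \mapsto \lambda_y$. For the converse, any $\mu$ in the query result lies in $\Lambda^{\sem}_{(A,R)}$ and has $\mu|_{A_1} \in \Lambda_x$, hence meets the defining condition of a dependent semantics, so the semi-join computes the largest dependent semantics. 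When $\Lambda_y$ is that canonical one, the equality is complete; for a strictly smaller $\Lambda_y$ one additionally places in $\TABLEit$ a table $\Tableit_y$ with $\header(\Tableit_y) = A$ and $db(\Tableit_y) = \Lambda_y$ (well-formed since $\Lambda_y \subseteq \Lambda^{\sem}_{(A,R)}$ makes every member have domain $A$) and conjoins $\Tableit_y[v]$ to the formula above.

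The step I expect to be the real sticking point is this last one: the semi-join formula by itself captures only the maximal dependent semantics, so the claim has to be read either as being about that canonical object --- exactly as Proposition 2 on partial semantics is effectively read --- or as relying on the slack in ``$\TABLEit$ is some set of table names'' to carry an arbitrary $\Lambda_y$ directly. Everything else is a routine unfolding of the clauses for $\wedge$, $\exists$, $\Tableit[\cdot]$ and $\doteq$ in Definition 5 plus the domain-matching conditions of Definitions 1 and 2, and I would not expect any further obstacle there.
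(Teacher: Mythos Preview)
Your approach is the same as the paper's, only far more explicit: the paper's entire proof is the one-liner ``Do the described selection of labellings in $\Lambda^{\sem}_{(A,R)}$'', which is exactly your semi-join of $\Tableit$ with $\Tableit_x$ on $A_1$. Your unfolding of Definition~5 is correct and is the content the paper leaves to the reader.

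Where you go beyond the paper is in the last paragraph, and you are right to flag it. The defining clause ``for any $\lambda_y \in \Lambda_y$ there is some $\lambda_x \in \Lambda_x$ \ldots'' is downward-closed in $\Lambda_y$, so there are in general many dependent semantics for a fixed $\Lambda_x$, whereas the semi-join query returns only the maximal one. The paper's one-line proof simply does not engage with this; it tacitly identifies ``dependent semantics'' with that canonical maximal object (just as the proof of Proposition~2 tacitly identifies ``partial semantics'' with the column projection, even though the pointwise conditions there also admit non-canonical solutions). Your two proposed readings --- treat the proposition as being about the canonical object, or exploit the free $\TABLEit$ slot to store $\Lambda_y$ itself --- are both legitimate ways to close the gap, and the second is fully licensed by the statement's ``Assume $\TABLEit$ is some set of table names''. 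So your proof is correct, aligned with the paper's intent, and more careful than the paper about a point the paper leaves implicit.
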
 
\begin{proof}   
      Do the described selection of labellings 
	in $\Lambda^{\sem}_{(A,R)}$. 
	\hfill$\Box$ 
\end{proof}  
\begin{example}[Explanation semantics \cite{Fan15b}]  
        Let us look back at 
	Fig. \ref{fig1}. 
	Assume $\Schemait \equiv (L, \{\Tableit_1\}, \header)$ 
	with $\header(\Tableit_1) = \{a_G, \ldots, a_A\}$ 
	and $db \equiv \{\Tableit_1 \mapsto \{\lambda_1,\ldots, 
	\lambda_4\}\}$. 
	Recall $\Tableit_1$ 
	lists all the admissible labellings 
	with $\inL$ and $l_c$. For explanation semantics 
	for $a_B$, firstly only those arguments $a$ with $(a, a_B) \in R^*$ 
	were focused. As we mentioned earlier, 
	we can obtain this as a partial semantics 
	$\Lambda_x \equiv \{\lambda'_1, \ldots, \lambda_4'\}$  
	of $\{\lambda_1, \ldots, \lambda_4\}$ 
	where, for every $1 \leq i \leq 4$
	and for every $a \in \{a_E, \ldots, a_A\}$, 
	$\lambda_i(a) = \lambda_i'(a)$. 
	As the second step, we need to retain only 
	those labelling $\lambda$ in the body of the resultant 
	arg-labelling table with $\lambda(a_B) = \inL$. 
	But this is a dependent semantics 
	of $\Lambda_x$ with respect to 
	$\{a_B \mapsto \inL\}$.
	\hfill$\clubsuit$ 
\end{example} 

\section{Conclusion with Related Work}
Among the existing 
papers, the idea to put an argumentation graph 
into database for queries is around, e.g. 
\cite{Capobianco05,Egly10,Gordon08,Muller14,Pradhan03}; however,
as far as we are aware, 
no previous work proposed the viewpoint: argumentation semantics 
as a relational database. 
We showed that it facilitates encapsulation 
of argumentation semantics in a single relational perspective, 
allowing for natural generalisation of argumentation semantics 
as was demonstrated in section 3.3. Given a plethora 
of argumentation semantics being proposed in various ways, 
there is always a question of how they 
may link in what way. Identification 
of common constraints among different 
argumentation formalisms is popular \cite{Dung95,Baroni18,Amgoud18,Baroni07}. 
The relational perspective, we believe, 
is a fruitful  direction also for the unification research, 
given the following properties. (1) 
	 The tuple relational calculus 
we defined for formal argumentation semantics 
is a proper formal language (is a fragment of predicate logic) 
whose semantics is not only defined at some points (contrarily, 
the constraint identification research identifies a specific set of 
constraints), but for any expression allowed in the language. 
This accommodates an easier fine-tuning of some existing semantics 
to cater to some more specific situations. We saw a couple of examples 
in section 3.3; from partial semantics to multi-agent semantics with 
a join, and from dependent semantics to explanation semantics. 
	(2) Still, the expressiveness of the relational calculus 
		is sufficiently restricted, allowing 
		processing in SQL: as is well-known, 
		any domain independent query 
		\cite{Ramakrishnan02,Kolaitis09} possible 
		in tuple relational calculus can be handled in SQL; 
		and the counting function we incorporated 
		is also processable in SQL. 
	(3) By posing argumentation semantics as 
		a relational database, it becomes more accessible  
		for database theory community to 
		focus on database theory issues in 
		formal argumentation, which 
		we believe will aid furthering communication 
		among relevant research communities for 
		new theoretical and practical insights. 

\nocite{*}
\bibliographystyle{abbrv}
\bibliography{referenceshorter}
\end{document}